\newtheorem{theorem}{Theorem}[section]
\newtheorem{proposition}{Proposition}[section]
\newtheorem{definition}{Definition}[section]
\newtheorem{assumption}{Assumption}[section]
\title{AL-CoLe: Augmented Lagrangian for Constrained Learning}
\name{Ignacio Boero, Ignacio Hounie, Alejandro Ribeiro}
\address{University of Pennsylvania}
\begin{document}
%
\maketitle
\begin{abstract}
Despite the non-convexity of most modern machine learning parameterizations, Lagrangian duality has become a popular tool for addressing constrained learning problems. We revisit Augmented Lagrangian methods, which aim to mitigate the duality gap in non-convex settings while requiring only minimal modifications, and have remained comparably unexplored in constrained learning settings. We establish strong duality results under mild conditions, prove convergence of dual ascent algorithms to feasible and optimal primal solutions, and provide PAC-style generalization guarantees. Finally, we demonstrate its effectiveness on fairness constrained classification tasks.
\end{abstract}
\begin{keywords}
Constrained learning, Augmented Lagrangian, Duality, PAC-learning
\end{keywords}
\section{Introduction}
\label{sec:intro}

The widespread adoption of machine learning in critical domains has underscored the need to curtail undesirable behavior. A natural approach is to extend empirical risk minimization (ERM) with additional statistical risks—such as fairness, robustness, or safety—as explicit constraints. Doing so, however, raises challenges for both optimization and generalization.  On the optimization side, mini-batch updates make feasibility during training intractable, limiting interior-point methods, while the non-convexity of neural networks prevents strong duality, weakening guarantees of feasibility and optimality for dual methods. On the generalization side, constraint estimation error impacts both feasibility and optimality, the latter depending on sensitivity to perturbations.  

To circumvent the lack of strong duality, prior work has treated non-convex constrained problems as parameterizations of convex functional ones, where duality holds. This yields bounds on the duality gap in terms of approximation error and allows extending PAC-learnability~\cite{valiant1984theory} to constrained settings~\cite{chamon2020probably, chamon2022constrained}. More recently,~\cite{elenter2024near} showed that primal solutions obtained from dual iterates can achieve near-feasible, near-optimal solutions in over-parameterized regimes.
Although a growing body of work has studied constrained learning via dual formulations~\cite{ diana2021minimax, bai2022achievingzeroconstraintviolation, ConstrainedFederated, ConstrainedActive, gallegoposada2022controlled, 10480186, manolache2025learningapproximatelyequivariantnetworks, navid}, these approaches typically rely on (non-augmented) dual ascent.

Augmented Lagrangian (AL) methods were originally proposed to close the duality gap in nonconvex optimization~\cite{RockaDual}. 
Although they have appeared sporadically in machine-learning applications—such as diffusion models~\cite{al-fioretto-diffusion-robots}, physics-informed neural networks~\cite{al-pinn}, pruning~\cite{al-prunning}, learning-to-optimize~\cite{al-fioretto-l2o}, SVMs~\cite{al-svm}, reinforcement learning~\cite{al-rl}, and stochastic optimization~\cite{al-stochastic-opt}—these uses are largely empirical and lack a theoretical foundation. 
In particular, they neither define constraints on statistical risk functionals nor study the generalization properties of empirical AL methods under such constraints.

In this work, we study AL methods for constrained learning and develop a generalization theory within the PAC-Constrained framework~\cite{chamon2020probably}. We prove that, under mild assumptions, strong duality holds between the primal and extended dual problems, eliminating approximation error. Crucially, we show that constrained learning problems are \emph{PACC-learnable}, rather than merely near-PACC-learnable as previously believed. Moreover, augmented dual algorithms—defined by sequences of unconstrained subproblems—yield primal iterates converging to feasible and optimal solutions. Finally, we validate our results empirically on a fairness-constrained classification task, comparing augmented and standard dual methods.

\begin{figure}[t!]
    \centering
    \includegraphics[width=0.95\linewidth]{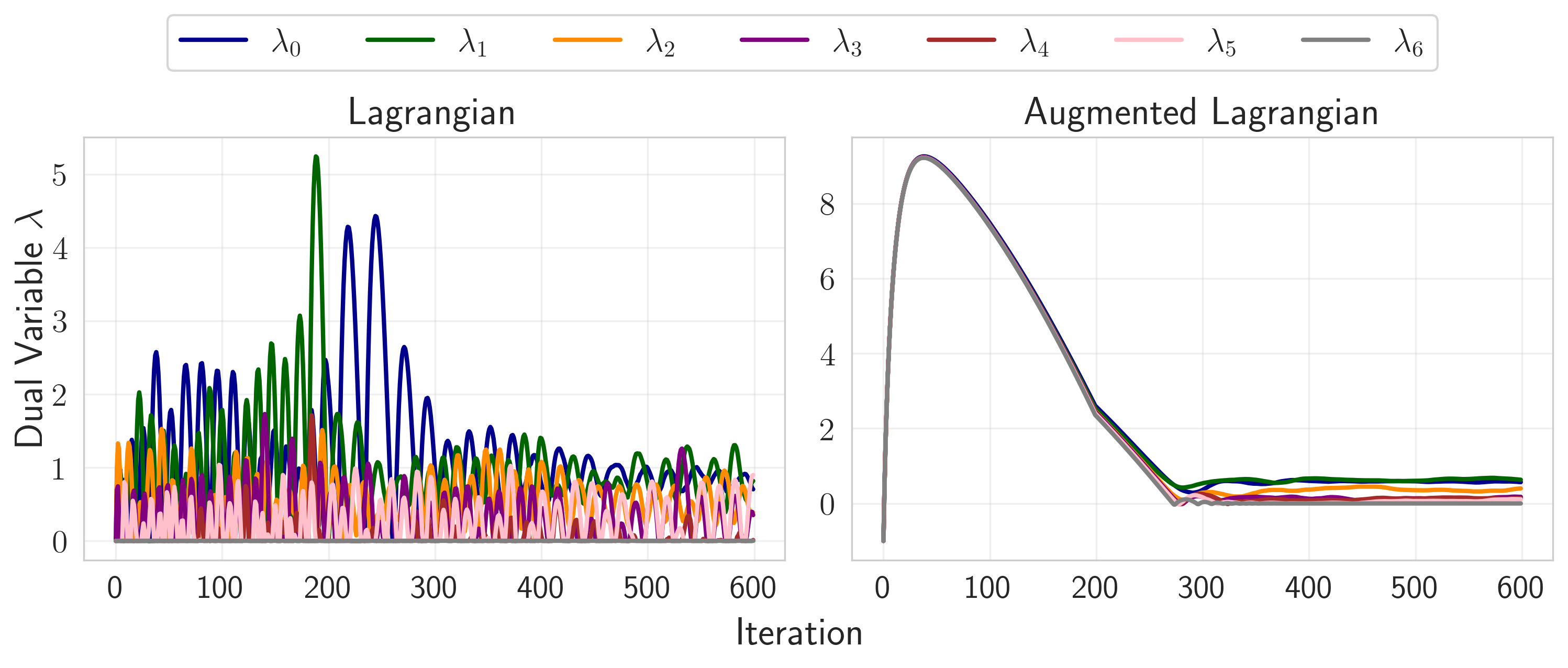}
    \caption{Dynamics of lagrange multipliers $\lambda$ training iterations, for both the augmented Lagrangian and regular Lagrangian methods. Both methods use the same hyperparameters, including the dual learning rate.}
    \label{fig:dynamics-lambda}
\end{figure}

\section{Augmented Lagrangian for constrained learning}

\subsection{Statistical Constrained Learning}

Constrained learning problems typically consist of the minimization of a risk functional under constraints over distinct risk functionals and data distributions, namely
\begin{equation}
\tag{P}
\label{eq:constrained_risk}
\begin{aligned}
 & \text{minimize} \; && {\ell}_0 (f_{\theta}) \quad \text{over all } \theta \in \Theta\\
& \; \text{s.t.} && \ell_i (f_{\theta})
 \le 0, \;  i \in \{1,\dots,m\} 
\end{aligned}
\end{equation}
with the risk functionals defined as
\begin{equation}
{\ell}_i (f_{\theta}) := \mathbb{E}_{(x,y)\sim \mathcal{D}_i}\!\left[\,\bar{\ell}_i\!\big(f_\theta(x),\, y\big)\,\right], \;  \forall i \in \{0,\dots,m\} 
\end{equation}
where $f_\theta(x): \mathcal{X} \to \mathcal{Y}$ is a function parametrized by the vector $\theta \subseteq \Theta \in \mathbb{R}^p$, $\bar{\ell}_i$ denotes the sample-wise loss and $\ell_i$ is its corresponding expected risk under distribution $\mathcal{D}_i$. Here $\ell_0$ represents the risk we wish to minimize (e.g. cross entropy) and $\ell_i$ the risks we wish to constrain (e.g. counterfactual fairness). For most common parametrization choices (e.g. neural networks),~\ref{eq:constrained_risk} is a non-convex constrained optimization problem. 

\subsection{Augmented Lagrangian and Duality}

Augmented Lagrangian methods for inequality-constrained problems were first introduced in \cite{RockaDual}. For problem \ref{eq:constrained_risk} the augmented Lagrangian is defined as
\begin{equation}
\label{eq:aug_lagrangian}
\begin{aligned}
L(\theta,\lambda,\alpha) = \ell_0(f_{\theta}) \;+\; \alpha \sum_{i=1}^m \Psi(l_i(f_\theta), \frac{\lambda_i}{\alpha})
\end{aligned}
\end{equation}
for $(\lambda,\alpha) \in T \subseteq \mathbb{R}^m \times (0,+\infty)$ where
\begin{equation}
    \Psi(x,y) = [\text{max}^2 \left\{0,2x + y \right\} - y^2]/4
\end{equation}

The corresponding augmented dual problem is defined as
\begin{equation}
\tag{D}
\label{eq:dual_problem}
\begin{aligned}
 \text{maximize} &\quad  g(\lambda, \alpha) \quad \text{over all} \;(\lambda, \alpha) \in T, \; \text{where} \\
 &g(\lambda, \alpha) := \inf_{\theta \in \Theta} L(\theta,\lambda, \alpha) 
\end{aligned}
\end{equation}

Analogous to the standard (non-augmented) dual problem, weak duality and concavity hold [\ref{sec:dual_prop}].
In contrast, the augmented Lagrangian enables strong duality even in non-convex settings. 
We now state the assumptions required to guarantee strong duality in our problem.

\begin{assumption}[Bounded Objective]
\label{ass:l0_bounded}
The function $\ell_0$ is bounded below.
\end{assumption}

\begin{assumption}[Closed Domain]
\label{ass:theta_closed}
The set $\Theta$ is closed (e.g., $\Theta=\mathbb{R}^p$).
\end{assumption}

\begin{assumption}[Continuity]
\label{ass:l_continuity}
The mapping $\theta \mapsto f_{\theta}$ is continuous, and the sample-wise losses 
$\bar{\ell}_i(\cdot,y)$ are continuous and uniformly bounded in~$y$.
\end{assumption}

\begin{assumption}[Compact Feasible Level Set]
\label{ass:feasible_compact}
There exist $u>0$ and $\bar{p} > \inf(P)$ such that
\[
\{\theta \in \Theta : \ell_0(\theta) \le \bar{p}, \ \ell_i(\theta) \le u,\ \forall i=1,\dots,m\}
\]
is compact.
\end{assumption}

Assumption~\ref{ass:l0_bounded} holds whenever the sample-wise losses $\bar{\ell}_0$ are bounded below, 
as is the case for standard losses such as MSE or cross-entropy. 
Assumptions~\ref{ass:theta_closed} and~\ref{ass:l_continuity} impose only mild regularity on the parameterization mapping and the constraint functions. 
Finally, Assumption~\ref{ass:feasible_compact} generalizes Slater’s condition to non-convex settings; 
in the convex case, Slater’s condition is sufficient for it to hold.

\begin{theorem}[Strong Duality]
\label{thm:strong_duality_main}
If Assumptions~\ref{ass:l0_bounded}--\ref{ass:feasible_compact} hold, then strong duality holds for problem~\ref{eq:constrained_risk}, i.e.,
\[
\inf(P) = \sup(D).
\]
\end{theorem}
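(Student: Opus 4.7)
The strategy I would follow is to work with the perturbation function
$p(u) := \inf\{\ell_0(f_\theta) : \theta\in\Theta,\ \ell_i(f_\theta) \le u_i,\ i=1,\dots,m\}$
and recast the augmented dual via
\[
g(\lambda,\alpha) \;=\; \inf_{u\in\mathbb{R}^m}\bigl\{p(u)+\lambda^\top u+\alpha\|u\|^2\bigr\},
\]
an identity obtained by exchanging the order of infima in the definition of $g$. Under this reformulation, strong duality reduces to exhibiting $(\lambda^*,\alpha^*)\in T$ such that $p(u)+\lambda^{*\top}u+\alpha^*\|u\|^2 \ge p(0)$ for every $u\in\mathbb{R}^m$. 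Combined with weak duality, this forces $\sup(D)=p(0)=\inf(P)$.

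The first step is to verify that $p$ is proper and lower semicontinuous at the origin. Properness is immediate: Assumption~\ref{ass:l0_bounded} gives $p>-\infty$ everywhere, and Assumption~\ref{ass:feasible_compact} furnishes a feasible point so $p(0)<+\infty$. For lower semicontinuity at $0$, take a sequence $u_k\to 0$ and near-minimizers $\theta_k\in\Theta$ with $\ell_i(f_{\theta_k})\le u_{k,i}$ and $\ell_0(f_{\theta_k})\to\liminf_k p(u_k)$. For $k$ large enough, these $\theta_k$ lie in the compact set of Assumption~\ref{ass:feasible_compact}; extracting a convergent subsequence with limit $\theta^*\in\Theta$ (using Assumption~\ref{ass:theta_closed}) and invoking Assumption~\ref{ass:l_continuity} gives $\ell_i(f_{\theta^*})\le 0$ and $\ell_0(f_{\theta^*})\le\liminf_k p(u_k)$, whence $p(0)\le\liminf_k p(u_k)$.

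The hard part is the second step: producing the quadratic support. I would follow Rockafellar's reasoning~\cite{RockaDual} for augmented duality in the nonconvex case and argue by contradiction. Suppose that for every $(\lambda,\alpha)\in T$ the support inequality failed; then for each $\alpha>0$ one could find $\lambda_\alpha$ and $u_\alpha$ with $p(u_\alpha)+\lambda_\alpha^\top u_\alpha+\alpha\|u_\alpha\|^2 < p(0)$. The compact-level-set assumption controls the descent of $p$ away from the origin, so outside a fixed neighborhood of $0$ the function $p$ cannot drop faster than linearly; consequently, for $\alpha$ sufficiently large, the quadratic term $\alpha\|u\|^2$ dominates and forces any such $u_\alpha$ to concentrate near $0$. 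Passing to a cluster point and applying the lower semicontinuity established in the previous step yields a contradiction. Geometrically, this is the separation of $\operatorname{epi}\,p$ from a downward-opening paraboloid through $(0,p(0))$ by a hyperplane in $\mathbb{R}^{m+1}$; the slope of this hyperplane furnishes $\lambda^*$ once the quadratic term with parameter $\alpha^*$ is absorbed.

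Taking the infimum over $u$ in the support inequality gives $g(\lambda^*,\alpha^*)\ge p(0)$, while weak duality gives $\sup(D)\le\inf(P)=p(0)$, forcing equality. The delicate point throughout is Assumption~\ref{ass:feasible_compact}: it plays the role of a \emph{nonconvex Slater condition} by preventing $p$ from decreasing too rapidly off the origin, which is precisely what lets the augmentation $\alpha\|u\|^2$ close the duality gap globally rather than only locally.
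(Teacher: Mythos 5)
Your first step (properness of $p$ and lower semicontinuity at the origin via the compact level set) is sound and is essentially the paper's verification of ``stability of degree~0,'' done by hand with sequences instead of by citing Berge's maximum theorem. The gap is in your second step. You reduce strong duality to exhibiting a \emph{fixed finite} pair $(\lambda^*,\alpha^*)$ with the global support inequality $p(u)+\lambda^{*\top}u+\alpha^*\|u\|^2\ge p(0)$. That is strictly stronger than $\inf(P)=\sup(D)$: it is the attainment of the dual supremum, which the paper treats as a separate theorem requiring the additional Assumptions~\ref{ass:licq}--\ref{ass:sosc} (LICQ, $\mathcal{C}^2$ smoothness, second-order sufficiency). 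Under Assumptions~\ref{ass:l0_bounded}--\ref{ass:feasible_compact} alone it can fail, and your contradiction argument does not close: from near-violators $u_\alpha$ you correctly get $u_\alpha\to 0$, but lower semicontinuity then only yields $\liminf_\alpha p(u_\alpha)\ge p(0)$, which is perfectly consistent with $p(u_\alpha)+\lambda^\top u_\alpha+\alpha\|u_\alpha\|^2<p(0)$ for every finite $(\lambda,\alpha)$. Concretely, take $m=1$ and $p(u)=p(0)-\sqrt{u}$ for small $u>0$: this $p$ is continuous and bounded below, the support inequality fails for every finite pair, yet $g(0,\alpha)=\inf_u\{p(u)+\alpha u^2\}\uparrow p(0)$, so strong duality holds without attainment. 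Your sketch would wrongly conclude strong duality fails here.

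The repair is to negate the right statement: assume $\sup(D)=L<p(0)$, pick near-minimizers $u_\alpha$ of $p(u)+\alpha\|u\|^2$, use global boundedness below of $p$ to force $u_\alpha\to 0$, and conclude $\limsup_\alpha p(u_\alpha)\le L<p(0)$, contradicting lower semicontinuity at $0$. Note also that the boundedness below of $p$ (the ``growth condition'') comes directly from Assumption~\ref{ass:l0_bounded} --- $p(u)\ge\inf_\theta\ell_0(f_\theta)$ uniformly in $u$ --- not from the compact level set, which you invoke for it; Assumption~\ref{ass:feasible_compact} is needed only for the lower semicontinuity step. This repaired argument is precisely Rockafellar's Theorem~4 (growth condition plus stability of degree~0 iff strong duality), which is what the paper's proof verifies and cites.
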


\begin{proof}
See Appendix \ref{sec:strong_dual}.
\end{proof}

Although concavity and strong duality imply that solving $P$ should be straightforward---since subgradient methods address the dual and, in turn, the primal---one difficulty remains: the dual optimum may exist only as a limit. Thus, establishing the existence of bounded dual variables $(\alpha^*, \lambda^*)$ such that $g(\alpha^*, \lambda^*) = \sup(D) = \inf(P)$ is crucial. In non-augmented dual formulations, the boundedness of the multipliers $\lambda^*$ hinges on first-order sufficiency (e.g., the KKT conditions) being necessary. This necessity, in turn, is guaranteed by assuming the objective and constraint functions are $C^1$ and that a first-order constraint qualification holds---for instance, the Linear Independence Constraint Qualification (LICQ).

\begin{assumption}[Linear Independence Constraint Qualification]
\label{ass:licq}
Problem $P$ satisfies the Linear Independence Constraint Qualification (LICQ). That is, the set of active constraint gradients 
$\{\nabla_{\theta} \ell_i(f_\theta^*)\}$ 
is linearly independent at the optimal point $\theta^*$.
\end{assumption}

For the augmented dual problem, it is also required that the second-order sufficient conditions be necessary, together with the assumption that both the constraint and objective functions are $\mathcal{C}^2$.

\begin{assumption}[Smoothness]
\label{ass:c2}
Each function $\ell_i(f_{\theta})$ is twice continuously differentiable, i.e., $\mathcal{C}^2$.
\end{assumption}

\begin{assumption}[Second-Order Sufficient Condition]
\label{ass:sosc}
Let $\bar{\theta} \in \Theta$ and $\bar{\lambda} \in \mathbb{R}^+$ be such that the KKT conditions hold, i.e.,
$\nabla_\theta L(\bar{\theta},\bar{\lambda},0) = 0$ and $\nabla_\lambda L(\bar{\theta},\bar{\lambda},0) = 0$. 
If Assumption~\ref{ass:licq} holds, such a pair $(\bar{\theta},\bar{\lambda})$ exists. Define
$H := \nabla_\theta^2 L(\bar{\theta},\bar{\lambda},0)$, 
$I_0 := \{\,i: f_i(\bar{\theta})=0,\ \lambda_i>0\,\}$,
$I_1 := \{\,i: f_i(\bar{\theta})=0,\ \lambda_i=0\,\}.$
Then the second-order sufficient condition requires
\begin{equation}
z^\top H z \;>\; 0 \quad \forall z \in C(\bar{\theta}), \quad \text{where}
\end{equation}
\[
C(\bar{\theta}) = \{\, z : 
\begin{cases}
\nabla f_i(\bar{\theta})^\top z = 0, & i \in I_0, \\ 
\nabla f_i(\bar{\theta})^\top z \le 0, & i \in I_1
\end{cases}\}.
\]
\end{assumption}

Assumption~\ref{ass:c2} enforces smoothness of both the loss functions and the parametrization mapping. 
It is satisfied, for instance, when employing neural networks with $\mathcal{C}^2$ activations (e.g., sigmoid) and smooth loss functions (e.g., MSE). 
In contrast, Assumptions~\ref{ass:sosc} and~\ref{ass:licq} are less transparent, as their validity depends on the first- and second-order properties of the problem at the optimum. 
Nevertheless, these conditions are typically assumed to hold. 
For a detailed discussion of sufficient conditions and constraint qualifications, see~\cite{con_qual}.

\begin{theorem}[Optimal Dual Solutions]
\label{thm:optimal_dual_main}
Suppose Assumptions~\ref{ass:l0_bounded}--\ref{ass:sosc} hold. 
Then there exists a pair $(\bar{\lambda}, \bar{\alpha})$ such that 
\[
g(\bar{\lambda},\bar{\alpha}) = \sup(D) = \inf(P).
\]
\end{theorem}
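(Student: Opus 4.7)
The plan is to reduce the attainment question to an explicit construction from the KKT system. Theorem~\ref{thm:strong_duality_main} already yields $\inf(P)=\sup(D)$, so all that remains is to exhibit a finite pair $(\bar{\lambda},\bar{\alpha})\in T$ realizing this value. I would take $\bar{\lambda}$ equal to the KKT multipliers at a primal optimizer $\bar{\theta}$ and $\bar{\alpha}$ large enough that $L(\cdot,\bar{\lambda},\bar{\alpha})$ is globally minimized at $\bar{\theta}$; then $g(\bar{\lambda},\bar{\alpha})=L(\bar{\theta},\bar{\lambda},\bar{\alpha})=\ell_0(\bar{\theta})=\inf(P)$, with the middle equality following from complementary slackness and the explicit form of $\Psi$, which forces $\Psi(\ell_i(\bar{\theta}),\bar{\lambda}_i/\alpha)=0$ for every $i$.

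I would first produce the primal optimizer and the multipliers. Continuity (Assumption~\ref{ass:l_continuity}), boundedness below (Assumption~\ref{ass:l0_bounded}), closedness of $\Theta$ (Assumption~\ref{ass:theta_closed}), and the compact feasible level set (Assumption~\ref{ass:feasible_compact}) force the infimum to be attained at some $\bar{\theta}$. Since the problem is $\mathcal{C}^1$ by Assumption~\ref{ass:c2} and LICQ holds by Assumption~\ref{ass:licq}, the KKT conditions are necessary at $\bar{\theta}$, yielding $\bar{\lambda}\in\mathbb{R}^m_+$ with stationarity and complementary slackness. A direct computation using the form of $\Psi$ shows that $\nabla_\theta L(\bar{\theta},\bar{\lambda},\alpha)=\nabla\ell_0(\bar{\theta})+\sum_i \bar{\lambda}_i\,\nabla\ell_i(\bar{\theta})=0$ for every $\alpha>0$, so stationarity persists for the augmented Lagrangian at $\bar{\theta}$ regardless of the penalty weight.

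Next I would promote $\bar{\theta}$ to a strict local minimizer of $L(\cdot,\bar{\lambda},\alpha)$ for $\alpha$ large. The Hessian $\nabla^2_\theta L(\bar{\theta},\bar{\lambda},\alpha)$ decomposes as the classical Lagrangian Hessian $H$ plus an $\alpha$-scaled rank-$|I_0|$ positive-semidefinite term supported on the gradients of the strictly active constraints. SOSC (Assumption~\ref{ass:sosc}) gives $H\succ 0$ on the critical cone $C(\bar{\theta})$; a standard rank-update argument then produces a threshold $\alpha_0$ above which the full augmented Hessian is positive definite on all of $\mathbb{R}^p$, so $\bar{\theta}$ becomes a strict local minimizer. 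To lift this to a global statement I would invoke Assumption~\ref{ass:feasible_compact}: outside the compact level set $\{\ell_0\leq\bar{p},\,\ell_i\leq u\}$, either $\ell_0>\bar{p}>\ell_0(\bar{\theta})$ (and nonnegativity of $\alpha\Psi$ finishes the bound), or some $\ell_i>u>0$, in which case a direct computation gives $\alpha\Psi(\ell_i,\bar{\lambda}_i/\alpha)\geq \alpha u^2$, dominating for $\alpha$ sufficiently large. Choosing $\bar{\alpha}$ above both thresholds simultaneously closes the argument.

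The main obstacle is precisely this local-to-global extension. SOSC is intrinsically a local condition, and the inner max in $\Psi$ leaves it only $\mathcal{C}^1$, so a Hessian-based argument alone cannot rule out distant competitors. The combination of compactness (controlling the feasible region via Assumption~\ref{ass:feasible_compact}) and the quadratic growth of the augmented penalty (controlling the infeasible region) is what makes the argument go through, provided one checks carefully that a single $\bar{\alpha}$ can be chosen to dominate both scales at once.
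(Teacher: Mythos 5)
Your construction diverges from the paper's proof, which never works in $\theta$-space at all: it verifies the growth condition (from Assumption~\ref{ass:l0_bounded}) and invokes \cite[Theorem 6]{RockaDual} to get \emph{second-order stability} of the perturbation function, $p(u)\ge p(0)-\langle\bar\lambda,u\rangle-\tfrac{\bar\alpha}{2}\|u\|^2$ near $u=0$, and then \cite[Theorem 5]{RockaDual} to convert this into attainment of the dual. That route is global in $\theta$ by construction, because all of $\Theta$ is collapsed into the $m$-dimensional perturbation variable $u$. Your direct exact-penalty construction is a legitimate alternative in spirit, but as written it has a genuine gap.

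The gap is the region $K\setminus U$, where $K=\{\ell_0\le\bar p,\ \ell_i\le u\}$ is the compact level set and $U$ is the neighborhood of $\bar\theta$ on which SOSC gives strict local minimality of $L(\cdot,\bar\lambda,\alpha)$. Your trichotomy covers $U$ and $\Theta\setminus K$ but says nothing about points of $K$ that are far from $\bar\theta$. These are exactly the dangerous ones: an infeasible $\theta'$ with small violation $\ell_i(\theta')=\epsilon>0$ and $\ell_0(\theta')<\inf(P)$ contributes only $\alpha\Psi(\epsilon,\bar\lambda_i/\alpha)=\alpha\epsilon^2+\bar\lambda_i\epsilon$ of penalty, and whether a single finite $\bar\alpha$ dominates \emph{uniformly} over all such points is precisely the question of whether $p(u)\ge p(0)-\langle\bar\lambda,u\rangle-O(\|u\|^2)$ holds — i.e., the second-order stability that the paper imports from Rockafellar. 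Local SOSC at one minimizer does not give this for free (e.g., a second global minimizer at which the constraint is inactive already breaks it, since there $\alpha\Psi=-\bar\lambda_i^2/(4\alpha)<0$ and $L$ dips below $\inf(P)$ for every finite $\alpha$); closing the hole requires either an additional compactness/monotonicity argument on $K\setminus U$ together with uniqueness of the minimizer, or passing through the perturbation function as the paper does. Separately, your claim that ``nonnegativity of $\alpha\Psi$ finishes the bound'' in the case $\ell_0>\bar p$ is false: $\Psi(x,y)\le 0$ whenever $x\le 0$ (this is exactly how the paper proves weak duality), and the correct lower bound is $\alpha\Psi(x,\lambda_i/\alpha)\ge-\lambda_i^2/(4\alpha)$. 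That error is repairable because the deficit is $O(1/\alpha)$ and $\bar p>\inf(P)$ gives slack, but it needs to be stated correctly.
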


\begin{proof}
See Appendix \ref{sec:optimal_dual}.
\end{proof}

Theorem~\ref{thm:optimal_dual} establishes the existence of bounded dual variables, provided that both the first- and second-order conditions are satisfied.

\subsection{Empirical Dual Learning}

The dual problem~\ref{eq:dual_problem} involves statistical functionals over unknown distributions. 
As customary in machine learning, we replace expected risks $\ell_i(f_\theta)$ with sample means 
$\hat\ell_i(f_\theta) := \tfrac{1}{N}\sum_{n=1}^N \bar{\ell}_i(f_{\theta}(x_n), y_n)$. 
The resulting constrained \emph{empirical} risk minimization problem is
\begin{equation}
\tag{$\hat{P}$}
\label{eq:emp_constrained_risk}
\begin{aligned}
& \min_{\theta \in \Theta} \; \hat{\ell}_0(f_{\theta}) 
&& \text{s.t. } \hat{\ell}_i(f_{\theta}) \le 0, \; i=1,\dots,m,
\end{aligned}
\end{equation}
with dual
\begin{equation}
\tag{$\hat{D}$}
\label{eq:emp_dual_problem}
\max_{(\lambda,\alpha)\in T} \; \hat{g}(\lambda,\alpha)
:= \inf_{\theta \in \Theta} \Bigl[\hat{\ell}_0(f_{\theta}) 
+ \alpha \sum_{i=1}^m \Psi\!\left(\hat{\ell}_i(f_\theta), \tfrac{\lambda_i}{\alpha}\right)\Bigr].
\end{equation}
We require strong duality to hold for the empirical problem as well. 
To avoid redundancy, Assumptions~\ref{ass:licq}, \ref{ass:sosc} and Theorem~\ref{thm:optimal_dual} are not repeated; 
with minor modifications, they ensure $\inf(\hat{P})=\sup(\hat{D})$ and the existence of 
$(\hat{\lambda},\hat{\alpha})$ such that $\hat{g}(\hat{\lambda},\hat{\alpha})=\sup(\hat{D})$.

The standard PAC-learning framework bounds the estimation gap between the statistical risk of the empirical solution and the optimum, i.e., $|P^* - \ell_0(f_{\hat{\theta}^\star})|$. 
Our constrained setting requires extending this framework to also bound infeasibility, for which we adopt the (agnostic) PAC-Constrained framework of~\cite{chamon2020probably}. 
As in any agnostic PAC framework, uniform convergence is required. 

\begin{assumption}[Uniform Convergence]\label{ass:unif-conv}
For $i=0,\dots,m$, there exists $\zeta_i(N,\delta)\!\geq\!0$, monotonically decreasing in $N$, such that
\[
|\,\ell_i(f_{\theta}) - \hat{\ell}_i(f_{\theta})| \;\le\; \zeta_i(N,\delta),
\]
for all $\theta \in \Theta$, with probability $1-\delta$ over i.i.d.~samples $(x_n,y_n)\!\sim\!\mathcal{D}_i$.
\end{assumption}
Our main theoretical result is given in Theorem~\ref{thm:generalisation}, which establishes bounds on both suboptimality and infeasibility of~\ref{eq:emp_dual_problem} as an approximation of~\ref{eq:constrained_risk}.

\begin{theorem}[PAC-Constrained Dual Learning]
\label{thm:generalisation_main}
Let $(\hat{\lambda}^\star,\hat{\alpha}^\star)$ be a solution of the empirical augmented dual problem~\ref{eq:emp_dual_problem}. 
Under Assumptions~\ref{ass:l0_bounded}--\ref{ass:unif-conv}, the solution of~\ref{eq:emp_constrained_risk} satisfies 
$\hat{\theta}^\star \in \arg\min_{\theta \in \Theta} \hat{L}(\theta,\hat{\lambda}^\star,\hat{\alpha}^\star)$ and, with probability $1-(2m+1)\delta$,
\begin{align}
|P^\star - \ell_0(f_{\hat{\theta}^\star})| 
&\;\le\; (2+\Delta_{\lambda})\bar{\zeta} + \Delta_{\alpha}m\bar{\zeta}^2, 
\label{eq:generalisation-obj_main} \\
\ell_i(f_{\hat{\theta}^\star}) 
&\;\le\; \zeta_i(N_i,\delta), \quad \forall i=1,\dots,m,
\label{eq:generalisation-constr_main}
\end{align}
where $P^\star$ is the optimal value of~\ref{eq:constrained_risk}, 
$\bar{\zeta} = \max_i \zeta_i(N_i,\delta)$, 
$\Delta_\lambda = \max(\|\lambda^*\|_1,\|\hat{\lambda}^*\|_1)$, 
and $\Delta_\alpha = \max(\alpha^*,\hat{\alpha}^*)$.
\end{theorem}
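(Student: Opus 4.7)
The plan is to decompose $|P^\star - \ell_0(f_{\hat{\theta}^\star})|$ via the triangle inequality into $|\ell_0(f_{\hat{\theta}^\star}) - \hat{P}^\star|$ and $|\hat{P}^\star - P^\star|$, and to control each term using strong duality (extended from Theorems~\ref{thm:strong_duality_main}--\ref{thm:optimal_dual_main} to~\ref{eq:emp_constrained_risk}) together with the uniform convergence of Assumption~\ref{ass:unif-conv}. The feasibility bound~\eqref{eq:generalisation-constr_main} is essentially immediate: since $\hat{\theta}^\star$ is primal-feasible for~\ref{eq:emp_constrained_risk}, $\hat{\ell}_i(f_{\hat{\theta}^\star}) \le 0$, and Assumption~\ref{ass:unif-conv} yields $\ell_i(f_{\hat{\theta}^\star}) \le \hat{\ell}_i(f_{\hat{\theta}^\star}) + \zeta_i(N_i,\delta) \le \zeta_i(N_i,\delta)$ with probability at least $1-\delta$ per constraint.

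For $|\ell_0(f_{\hat{\theta}^\star}) - \hat{P}^\star|$, I would exploit complementary slackness at the empirical primal--dual pair: at the KKT point, $\hat{\alpha}^\star \Psi(\hat{\ell}_i(f_{\hat{\theta}^\star}), \hat{\lambda}^\star_i/\hat{\alpha}^\star) = 0$, so empirical strong duality gives $\hat{P}^\star = \hat{L}(\hat{\theta}^\star, \hat{\lambda}^\star, \hat{\alpha}^\star) = \hat{\ell}_0(f_{\hat{\theta}^\star})$, and a single application of Assumption~\ref{ass:unif-conv} to $\ell_0$ yields $|\ell_0(f_{\hat{\theta}^\star}) - \hat{P}^\star| \le \bar{\zeta}$. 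To bound $|\hat{P}^\star - P^\star|$, I would sandwich each direction by evaluating the augmented Lagrangian at the other problem's primal optimum. Concretely, $P^\star = \inf_\theta L(\theta,\lambda^\star,\alpha^\star) \le L(\hat{\theta}^\star, \lambda^\star, \alpha^\star)$, and the non-positivity $\hat{\ell}_i(f_{\hat{\theta}^\star}) \le 0$ combined with the elementary fact that $\alpha\Psi(x,\lambda/\alpha) \le 0$ whenever $x \le 0$ and $\lambda,\alpha \ge 0$ gives $\hat{L}(\hat{\theta}^\star,\lambda^\star,\alpha^\star) \le \hat{\ell}_0(f_{\hat{\theta}^\star}) = \hat{P}^\star$. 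It thus suffices to bound $L - \hat{L}$ at $(\hat{\theta}^\star,\lambda^\star,\alpha^\star)$; the reverse inequality follows symmetrically using the statistical primal optimum $\theta^\star$ and the dual pair $(\hat{\lambda}^\star, \hat{\alpha}^\star)$.

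The principal difficulty is controlling the Lagrangian gap $L - \hat{L}$ so that its linear-in-$\bar{\zeta}$ coefficient scales with $\|\lambda\|_1$ rather than with a crude uniform loss bound, and so that its quadratic-in-$\bar{\zeta}$ coefficient is finite. The key observation is that the augmented penalty $h(x) = \max^2\{0,x\}$ has a Lipschitz derivative, so a first-order Taylor expansion around $\hat{\ell}_i$ yields
\[
\alpha\Psi(\ell_i,\lambda_i/\alpha) - \alpha\Psi(\hat{\ell}_i,\lambda_i/\alpha) = \max\{0,\, 2\alpha\hat{\ell}_i + \lambda_i\}\,(\ell_i - \hat{\ell}_i) + R_i,
\]
with $|R_i| \le \alpha(\ell_i - \hat{\ell}_i)^2 \le \alpha\bar{\zeta}^2$. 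When $\hat{\ell}_i \le 0$ and $\lambda_i \ge 0$, the coefficient $\max\{0, 2\alpha\hat{\ell}_i + \lambda_i\}$ is dominated by $\lambda_i$, producing $|L - \hat{L}| \le (1 + \|\lambda\|_1)\bar{\zeta} + \alpha m \bar{\zeta}^2$ after summing over $i$ and adding the $\ell_0$ term. Plugging this into both sandwich bounds and combining with the triangle term $|\ell_0(f_{\hat{\theta}^\star}) - \hat{P}^\star| \le \bar{\zeta}$ delivers the claimed $(2+\Delta_\lambda)\bar{\zeta} + \Delta_\alpha m \bar{\zeta}^2$. A union bound over the uniform convergence events---$m$ for the feasibility guarantees and $m+1$ for the risks invoked in the two sandwich arguments---then produces the stated probability $1 - (2m+1)\delta$.
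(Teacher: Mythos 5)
Your argument is correct and arrives at the stated constants, but by a genuinely different route from the paper. The paper never introduces the intermediate quantity $\hat P^\star$: it works with the perturbation functions $p$ and $\hat p$ of \ref{eq:pert_func} and \ref{eq:emp_pert_func} and invokes the quadratic stability inequality $p(0)\le p(u)+\langle\lambda^\star,u\rangle+\alpha^\star\|u\|^2$ supplied by Theorem~\ref{thm:optimal_dual}, evaluated at $u=\mathbf{1}_m\bar\zeta$, where $\hat\theta^\star$ (resp.\ $\theta^\star$) is feasible for the relaxed statistical (resp.\ empirical) problem. You instead peel off $|\hat P^\star-\ell_0(f_{\hat\theta^\star})|\le\bar\zeta$ and control $|P^\star-\hat P^\star|$ by evaluating each augmented Lagrangian at the other problem's primal optimum, replacing the stability inequality with an elementary smoothness bound on $\Psi$: the derivative $\max\{0,2\alpha x+\lambda\}$ is dominated by $\lambda_i$ at a feasible point and is $2\alpha$-Lipschitz, which is precisely what generates the $\|\lambda\|_1\bar\zeta$ and $\alpha m\bar\zeta^2$ terms. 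The two arguments are two readings of the same fact---your Lagrangian evaluation is the primal-side version of the paper's perturbation-function inequality---and both lean on Theorem~\ref{thm:optimal_dual} (and its empirical analogue) only to guarantee existence of bounded optimal pairs $(\lambda^\star,\alpha^\star)$ and $(\hat\lambda^\star,\hat\alpha^\star)$. What your version buys is self-containedness: you need only weak duality, feasibility, and the explicit form of $\Psi$, not Rockafellar's characterization of second-order stability of $p$. Two minor remarks. First, $\hat P^\star=\hat\ell_0(f_{\hat\theta^\star})$ requires no complementary slackness; it is just optimality of $\hat\theta^\star$ for \ref{eq:emp_constrained_risk} (though your saddle-point identity is the right way to justify the theorem's side claim that $\hat\theta^\star\in\arg\min_\theta\hat L(\theta,\hat\lambda^\star,\hat\alpha^\star)$, which the paper outsources to Rockafellar's Corollary~3.2). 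Second, your first sandwich direction gives $P^\star\le\ell_0(f_{\hat\theta^\star})+(2+\|\lambda^\star\|_1)\bar\zeta+\alpha^\star m\bar\zeta^2$, slightly looser than the paper's \ref{eq:upper-bound}, but this is absorbed by the stated $(2+\Delta_\lambda)\bar\zeta$ bound, so nothing is lost.
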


\begin{proof}
See Appendix \ref{sec:gener}.
\end{proof}

Inequalities~\ref{eq:generalisation-obj}--\ref{eq:generalisation-constr} 
correspond exactly to the conditions for PACC-learnability~\cite[Def.~2]{chamon2020probably}. 
Hence, Theorem~\ref{thm:generalisation} establishes that problem~\ref{eq:constrained_risk} is \emph{PACC-learnable}. 
This is a stronger guarantee than prior results~\cite{chamon2020probably,chamon2022constrained,elenter2024near}, 
which only achieved near-PACC-learnability~\cite[Def.~3]{chamon2020probably}, 
since their bound for $|P^\star - \ell_0(f_{\hat{\theta}^\star})|$ included a constant term~$\epsilon_0$ independent of~$\bar{\zeta}$. 
In contrast, our augmented Lagrangian approach ensures that, with sufficient samples, the gap can be driven arbitrarily close to zero.

\section{Augmented dual ascent}
In the previous section we concluded that~\ref{eq:emp_dual_problem} provides an attractive approach to obtain a PACC solution of~\ref{eq:constrained_risk}, since it reduces the task to solving a concave, unconstrained problem. 
However, even if $(\lambda_k,\alpha_k)\!\to\!(\lambda^*,\alpha^*)$, the associated primal sequence 
$\{\theta_k^\star \in \arg\min_{\theta} \hat{L}(\theta,\lambda_k,\alpha_k)\}$ 
need not be asymptotically minimizing for~\ref{eq:emp_constrained_risk}. 
Formally, an \emph{asymptotically minimizing sequence} $\{\theta_k\}_{k=0}^\infty$ must satisfy: 
\begin{align}
\limsup_{k\to\infty} \hat{\ell}_i(f_{\theta_k}) &\;\le\; 0, \quad \forall i=1,\dots,m, 
\label{eq:asymp-feas} \\
\limsup_{k\to\infty} \hat{\ell}_0(f_{\theta_k}) &\;=\; \inf(\hat{P}). 
\label{eq:asymp-opt}
\end{align}
This drawback is typical of non-augmented dual methods, even in convex settings, where randomization or averaging over primal iterates is required. 
In contrast, the augmented Lagrangian avoids this phenomenon entirely. 
The next theorem formalizes this property.

\begin{theorem}[Primal Convergence]
\label{thm:primal_convergence}
Let $\delta >0$ and $(\lambda_k,\alpha_k)$ satisfy 
$\lim_{k\to\infty} \hat{g}(\lambda_k,\alpha_k - \delta) = \sup(\hat{D}) < \infty$. Let $\theta_k$ obey
\[
\hat{L}(\theta_k,\lambda_k,\alpha_k) 
\le \inf_{\theta\in \Theta} \hat{L}(\theta,\lambda_k,\alpha_k) + \epsilon_k,
\quad \epsilon_k \to 0.
\]
Then $(\theta_k)$ is asymptotically feasible. 
If $(\lambda_k)$ is bounded, $(\theta_k)$ is also asymptotically minimizing for~($\hat{P}$).
\end{theorem}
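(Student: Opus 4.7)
The plan is to establish the two conclusions sequentially in three stages.

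\textbf{Stage 1 (Setup).} Monotonicity of $\hat{g}$ in $\alpha$ (verifiable by direct differentiation of $\alpha\Psi(x,\lambda/\alpha)$ with respect to $\alpha$) yields the sandwich $\hat{g}(\lambda_k,\alpha_k-\delta)\le \hat{g}(\lambda_k,\alpha_k)\le \sup(\hat{D})$, so the hypothesis forces $\hat{g}(\lambda_k,\alpha_k)\to \sup(\hat{D})$. Combined with the $\epsilon_k$-near-minimality of $\theta_k$ and the trivial bound $\hat{L}(\theta_k,\lambda_k,\alpha_k-\delta)\ge \hat{g}(\lambda_k,\alpha_k-\delta)$, both $\hat{L}(\theta_k,\lambda_k,\alpha_k)$ and $\hat{L}(\theta_k,\lambda_k,\alpha_k-\delta)$ converge to $\sup(\hat{D})$, and consequently their difference $\Delta_k := \hat{L}(\theta_k,\lambda_k,\alpha_k)-\hat{L}(\theta_k,\lambda_k,\alpha_k-\delta)$ tends to $0$ from above.

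\textbf{Stage 2 (Asymptotic feasibility).} I would rewrite each penalty term through the identity $\alpha\Psi(x,\lambda/\alpha) = \inf_{u\ge x}\{\lambda u + \alpha u^2\}$ (a direct consequence of solving the one-dimensional quadratic in $u$) and enumerate the three possible regimes: active at both $\alpha_k$ and $\alpha_k-\delta$, inactive at both, or the intermediate case where $\lambda_{i,k}+2\alpha_k \hat{\ell}_i(f_{\theta_k})\le 0 < \lambda_{i,k}+2(\alpha_k-\delta)\hat{\ell}_i(f_{\theta_k})$. A short computation in each regime produces the pointwise bound
\[
\alpha_k\Psi\!\bigl(\hat{\ell}_i(f_{\theta_k}),\tfrac{\lambda_{i,k}}{\alpha_k}\bigr) - (\alpha_k-\delta)\Psi\!\bigl(\hat{\ell}_i(f_{\theta_k}),\tfrac{\lambda_{i,k}}{\alpha_k-\delta}\bigr) \;\ge\; \delta\,\bigl(\hat{\ell}_i(f_{\theta_k})^+\bigr)^2.
\]
Summing over $i$ and invoking Stage 1 gives $\delta\sum_i \bigl(\hat{\ell}_i(f_{\theta_k})^+\bigr)^2 \le \Delta_k \to 0$, whence $\limsup_k \hat{\ell}_i(f_{\theta_k})\le 0$ for every $i$.

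\textbf{Stage 3 (Asymptotic optimality).} Empirical strong duality (Theorem~\ref{thm:strong_duality_main}, applied under the analogous assumptions for $\hat{P}$) gives $\sup(\hat{D})=\inf(\hat{P})$. Writing $P_{i,k}:=\alpha_k\Psi(\hat{\ell}_i(f_{\theta_k}),\lambda_{i,k}/\alpha_k)$, Stage 1 already yields $\hat{\ell}_0(f_{\theta_k}) + \sum_i P_{i,k} \to \inf(\hat{P})$. Under bounded $\lambda_k$, I would exploit that each coordinate difference in Stage 2 is non-negative and their sum vanishes, so each vanishes individually. In the active regime this forces $\hat{\ell}_i(f_{\theta_k})\to 0$ and hence $P_{i,k}=\lambda_{i,k}\hat{\ell}_i(f_{\theta_k})+\alpha_k\hat{\ell}_i(f_{\theta_k})^2\to 0$; in the inactive regime $P_{i,k}=-\lambda_{i,k}^2/(4\alpha_k)$, and since the vanishing difference upper-bounds $\lambda_{i,k}^2\delta/[4\alpha_k(\alpha_k-\delta)]$, this also gives $P_{i,k}\to 0$. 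Thus $\limsup_k \hat{\ell}_0(f_{\theta_k})\le \inf(\hat{P})$; the matching $\liminf$-bound follows from Stage 2's feasibility together with lower semicontinuity of the perturbation function at the origin, which is delivered by Assumption~\ref{ass:feasible_compact}.

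The main obstacle is Stage 3. Stage 2 needs only a lower bound on the \emph{sum} of coordinate differences, but Stage 3 needs coordinate-wise vanishing of each $P_{i,k}$, and the floor $P_{i,k}\ge -\lambda_{i,k}^2/(4\alpha_k)$ can a priori contribute a persistent negative value. The crucial leverage is that each non-negative coordinate difference must vanish individually, which in the inactive regime ties the magnitude of $\lambda_{i,k}$ to that of $\alpha_k$---and this is precisely where the margin $\delta>0$ in the hypothesis is essential.
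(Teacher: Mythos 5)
The paper does not prove this theorem itself; it cites \cite[Theorem~3]{RockaDual} verbatim. Your proposal is a from-scratch reconstruction of essentially that argument, and its core is sound: the representation $\alpha\Psi(x,\lambda/\alpha)=\inf_{u\ge x}\{\lambda u+\alpha u^2\}$ immediately gives monotonicity of $\hat L$ and $\hat g$ in $\alpha$ (Stage~1), and the pointwise gap bound in Stage~2 is correct --- indeed it follows in one line from that representation: if $u_\alpha$ attains the inner infimum at level $\alpha_k$, then the coordinate difference is at least $\delta u_\alpha^2\ge\delta(\hat\ell_i(f_{\theta_k})^+)^2$, which also shows each coordinate difference is non-negative, as Stage~3 needs. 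This is exactly the role the margin $\delta$ plays in Rockafellar's proof, so you have correctly identified the key mechanism that the paper outsources.

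Two points in Stage~3 are stated more strongly than you actually justify, though neither is fatal. First, in the active regime you claim $P_{i,k}\to 0$, but $d_{i,k}=\delta\,\hat\ell_i(f_{\theta_k})^2\to 0$ only gives $\hat\ell_i(f_{\theta_k})\to 0$; the term $\alpha_k\hat\ell_i(f_{\theta_k})^2=(\alpha_k/\delta)d_{i,k}$ need not vanish if $\alpha_k\to\infty$. Fortunately it is non-negative, and for the conclusion you only need $\liminf_k\sum_i P_{i,k}\ge 0$, so you should weaken the claim accordingly. Second, in the inactive regime the step from $d_{i,k}=\lambda_{i,k}^2\delta/[4\alpha_k(\alpha_k-\delta)]\to 0$ to $P_{i,k}=-\lambda_{i,k}^2/(4\alpha_k)\to 0$ requires a case split you leave implicit: if $\alpha_k$ stays bounded, the vanishing of $d_{i,k}$ forces $\lambda_{i,k}\to 0$; if $\alpha_k\to\infty$, boundedness of $\lambda_k$ gives $|P_{i,k}|\le\sup_k\|\lambda_k\|_\infty^2/(4\alpha_k)\to 0$ directly. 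You should also treat the intermediate regime explicitly in Stage~3 (there $P_{i,k}=-\lambda_{i,k}^2/(4\alpha_k)$ and $d_{i,k}\ge\lambda_{i,k}^2\delta/(4\alpha_k^2)$, so the same dichotomy applies). With these repairs, and noting that the final $\liminf$ bound should be run through $u_k^+=(\max\{0,\hat\ell_i(f_{\theta_k})\})_i\downarrow 0$ so that the paper's one-sided stability $\lim_{u\downarrow 0}\hat p(u)=\hat p(0)$ applies, your argument is a complete and self-contained proof of the statement.
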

\begin{proof}
    See ~\cite[Theorem 3]{RockaDual}
\end{proof}

Theorem~\ref{thm:primal_convergence} ensures that any convergent sequence of dual iterates induces a convergent sequence of primal iterates. This is crucial in practice: any stopping point of the dual method yields a primal iterate close to optimal, without the need for averaging or randomization over iterates. To solve~\ref{eq:emp_dual_problem}, we use the \emph{increased-shifted penalty method}~\cite[Sec 7]{cone_proj}. The name refers to the periodic increase of~$\alpha$ and corrective shifts of~$\lambda$. 

\begin{align}
\lambda_{k+1} &=
\begin{cases}
\lambda_k + \tfrac{1}{\alpha_k}\,\ell(\theta_k), & \ell(\theta_k) \ge -\tfrac{\lambda_k}{2\alpha_k}, \\[4pt]
\lambda_k - \tfrac{2\lambda_k}{\alpha_k}, & \ell(\theta_k) \le -\tfrac{\lambda_k}{2\alpha_k},
\end{cases} \label{eq:update-lambda} \\[6pt]
\alpha_{k+1} &=
\begin{cases}
\alpha_k \cdot c, & \text{every $I$ iterations}, \\[4pt]
\alpha_k, & \text{otherwise},
\end{cases} \label{eq:update-alpha} \\[6pt]
\theta_{k+1} &= \arg\min_{\theta}\, L(\theta,\lambda_{k+1},\alpha_{k+1}). 
\label{eq:update-theta}
\end{align}

\section{Numerical results}

We consider learning classifiers that remain insensitive to protected attributes despite their correlation with predictive features using the COMPAS recidivism dataset~\cite{COMPAS}, where the goal is to predict recidivism while controlling for gender and racial bias.
In this binary classification task, the objective loss is cross-entropy $\bar{\ell}_0(\hat{y}, y) = -\log [\hat{y}]_y$. We then partition the input as $x = [\tilde{x}, z]$, where $z$ contains the protected features and $\tilde{x}$ contains the remaining input features. Following prior work~\cite{elenter2024near, chamon2020probably} we use $D_\text{KL}\left(f_{\theta}(\tilde{x}, z) \,\|\, f_{\theta}(\tilde{x},\rho_i (z))\right)\leq c, \quad i=1,\ldots,m$ as constraints,
where $\rho_i$ are transformations representing single-variable modifications of protected attributes, $c > 0$ is the desired sensitivity threshold, and $m$ is the number of constraints. Each constraint enforces approximate invariance to changes in protected features. This formulation aligns with the notion of average counterfactual fairness from~\cite[Definition 5]{kusner2018counterfactual}.

We evaluate three algorithmic approaches approaches: (i) an \emph{unconstrained} predictor trained solely with cross-entropy loss $\tilde{\ell}_0(\hat{y}, y)$, (ii) a fairness-constrained predictor trained using the Lagrangian dual method (iii) the fairness-constrained predictor using the Augmented Lagrangian dual method. Both the Augmented and regular Lagrangian utilize the same hyperparameters. For the regular Lagrangian we present results for both the final iterate $f_{\theta}(T)$ and a \emph{randomized} predictor that samples uniformly from primal iterates $\{f_{\theta}(t)\}_{t=t_0}^T$ for each prediction.

\textbf{Results.} As shown in figure~\ref{fig:dynamics-lambda}, the augmented Lagrangian leads to smoother dual variable dynamics,  effectively mitigating the oscilations and cyclo-stationary behaviour that arises in standard Lagrangian dual learning. As shown in figure~\ref{fig:dynamics-slack}, which depicts the dynamics of constraint satisfaction across optimization steps, the effect is more pronounced in more challenging constraints (right plot) which show larger oscillations between iterates. Furthermore, the last iterate of the augmented Lagrangian method attains both slightly better performance in terms of test accuracy while improving constraint satisfaction, compared to to the last iterate of the regular Lagrangian. As shown in Figure~\ref{fig:accuracy}, it improves invariance with respect to gender; the rate with which predictions change when this attribute is flipped changes from $7.6\%$ to $1.5\%$. Although randomizing the regular Lagrangian solution over iterates yields accuracy and fairness metrics comparable to the augmented Lagrangian last iterate, randomization is impractical due to the need to store and deploy multiple models at inference time.

\begin{figure}[t!]
    \centering
    \includegraphics[width=0.95\linewidth]{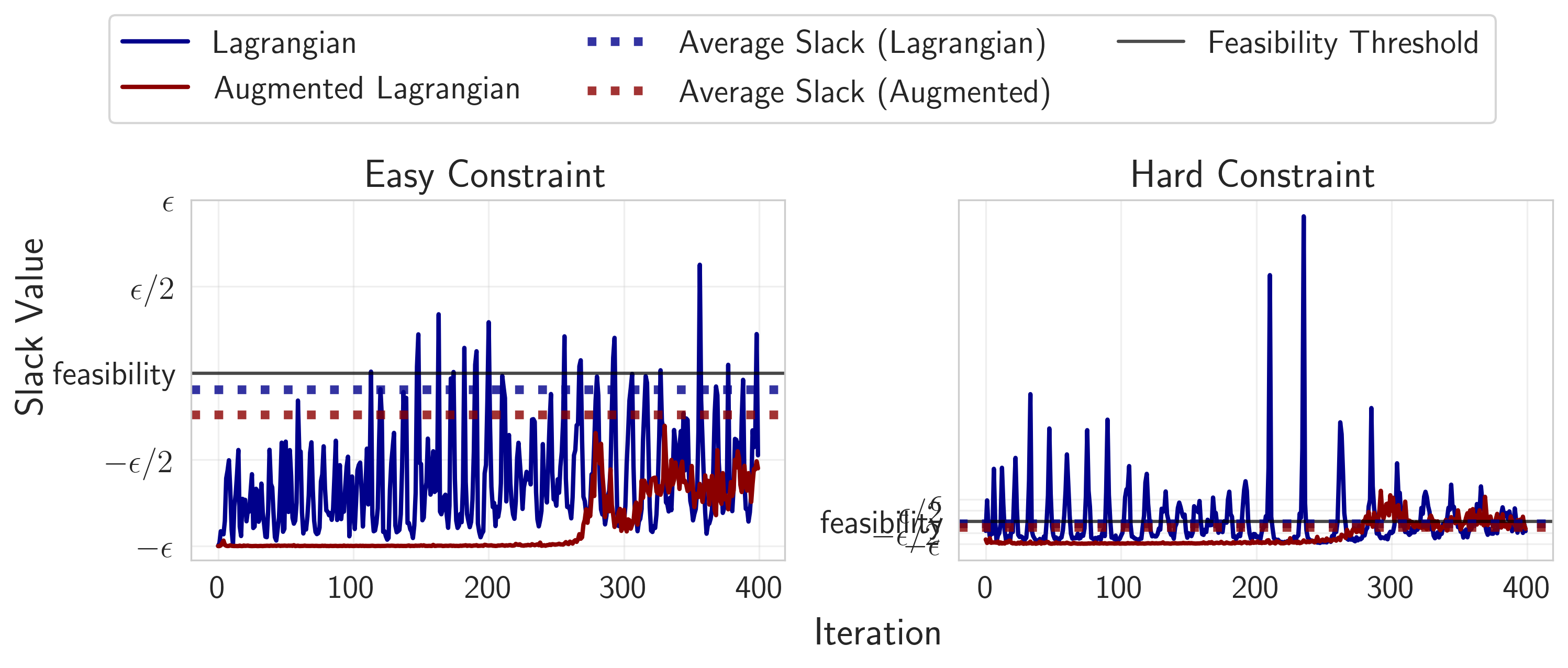}
\caption{\textbf{(Left)}: Evolution of constraint slacks $s_i^t$ during training for easier constraints. 
\textbf{(Right)}: Same for harder constraints. 
A constraint is satisfied when $s_i^t<0$. 
We compare augmented and standard Lagrangian methods.}    \label{fig:dynamics-slack}
\end{figure}

\begin{figure}[t!]
    \centering
    \includegraphics[width=0.95\linewidth]{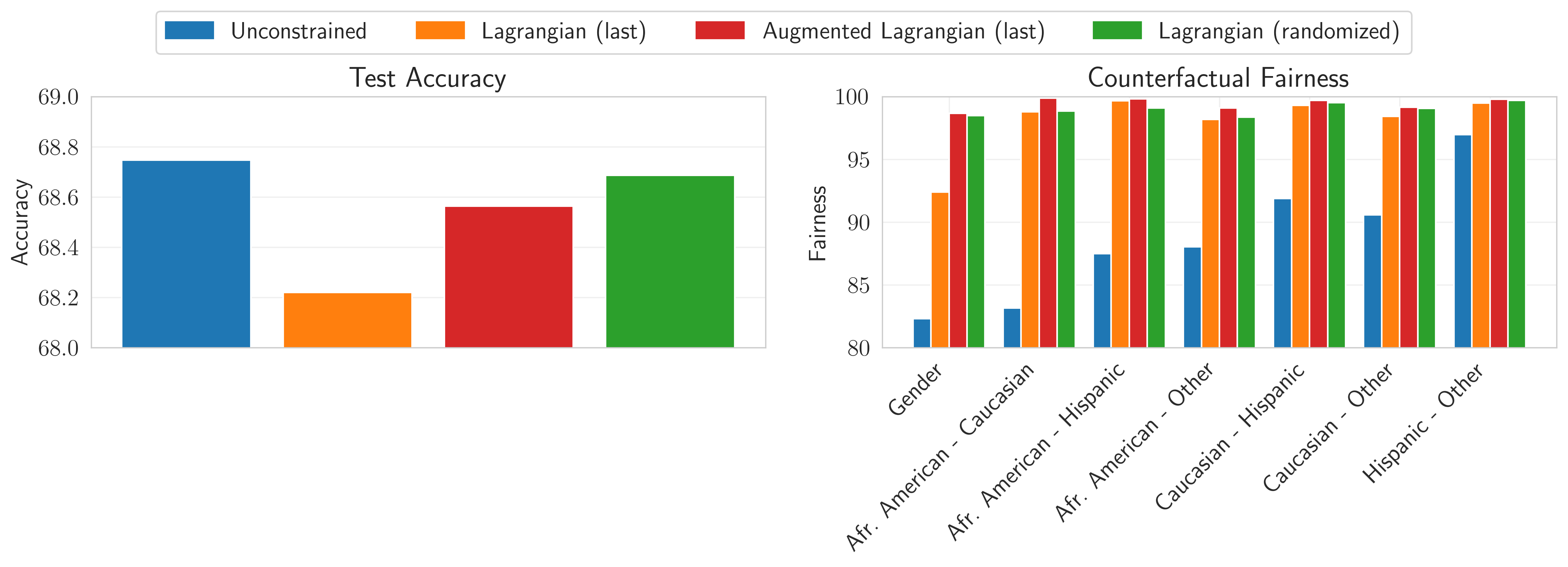}
    \caption{\textbf{(Left)}: Classification accuracy over held-out data for unconstrained and constrained methods. \textbf{(Right)}: Counterfactual fairness measured as the fraction of predictions that do not change after modifying a sensitive attribute.}
    \label{fig:accuracy}
\end{figure}

\section{Conclusions}

This work expands the literature on algorithms for constrained learning. 
Given the non-convex nature of these problems, we proposed the use of the augmented Lagrangian, 
which closes the duality gap and yields stronger guarantees for the estimation error. 
In particular, we established that constrained learning problems are \emph{PACC-learnable}, 
rather than merely near-PACC-learnable as previously believed. 
Moreover, the augmented Lagrangian ensures that the sequence of primal iterates is asymptotically minimizing, 
a property of practical relevance confirmed by our experiments: 
dual variables remained significantly more stable during training, 
and the last primal iterate of the augmented dual consistently outperformed that of the non-augmented dual.


\newpage
\appendix
\section{Duality of the Augmented Lagrangian}


\subsection{Properties of the dual problem}
\label{sec:dual_prop}

\begin{proposition}[Weak Duality]
\label{thm:weak_d}
For the (augmented) dual problem \ref{eq:dual_problem}, weak duality holds. Namely,
\begin{equation}
\label{eq:perturbation}
\begin{aligned}
\inf (P) \geq \sup (D)
\end{aligned}
\end{equation}
\end{proposition}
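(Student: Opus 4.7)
The plan is to reduce the statement to a pointwise inequality of the form $L(\theta,\lambda,\alpha)\le \ell_0(f_\theta)$ that holds at every primal-feasible $\theta$ and every dual pair $(\lambda,\alpha)\in T$. Once this is in place, I would take the infimum over $\theta\in\Theta$ on the left to get $g(\lambda,\alpha)\le \ell_0(f_\theta)$, then the infimum over feasible $\theta$ on the right to obtain $g(\lambda,\alpha)\le \inf(P)$, and finally the supremum over $(\lambda,\alpha)\in T$ on the left to conclude $\sup(D)\le \inf(P)$. This is the textbook weak-duality chain; all the real work is in the pointwise bound.

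To establish that bound I would show that the augmented penalty $\Psi(\ell_i(f_\theta),\lambda_i/\alpha)$ is non-positive at every feasible point, so that $\alpha\sum_i \Psi\le 0$ and the bound follows. Fix $x=\ell_i(f_\theta)\le 0$ (by feasibility) and $y=\lambda_i/\alpha\ge 0$ (using the standard convention $T=\mathbb{R}^m_+\times(0,\infty)$ for inequality-constrained augmented Lagrangians). I would then split on the sign of $2x+y$: if $2x+y\le 0$, then $\Psi(x,y)=-y^2/4\le 0$ immediately; if $2x+y>0$, a short algebraic manipulation gives $\Psi(x,y)=[(2x+y)^2-y^2]/4=x(x+y)$, and since $2x+y>0$ forces $x+y\ge -x\ge 0$ while $x\le 0$, the product is non-positive. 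In either case $\Psi\le 0$, which is exactly what is needed.

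The main obstacle is notational rather than analytical: the paper writes $T\subseteq\mathbb{R}^m\times(0,\infty)$ without explicitly imposing $\lambda\ge 0$, yet the sign analysis of $\Psi$ requires it (and it is the standard convention for one-sided augmented Lagrangians, as in Rockafellar's original formulation). I would therefore open the proof by fixing $T=\mathbb{R}^m_+\times(0,\infty)$, or equivalently by restricting attention to the half of $T$ where $\lambda\ge 0$, noting that this does not shrink $\sup(D)$ since the penalty term is never improved by taking $\lambda_i<0$ on a feasible point. With that convention fixed, the remainder of the argument is a three-line chain of inequalities and the proof is complete.
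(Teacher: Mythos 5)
Your proof follows the same route as the paper's: establish $\Psi(\ell_i(f_\theta),\lambda_i/\alpha)\le 0$ at every primal-feasible $\theta$, deduce the pointwise bound $L(\theta,\lambda,\alpha)\le\ell_0(f_\theta)$, and chain the infima and supremum; your explicit case analysis of $\Psi$ simply fills in a step the paper asserts without verification, and the algebra ($\Psi=-y^2/4$ when $2x+y\le 0$, $\Psi=x(x+y)$ otherwise) is correct. The one thing to fix is the opening restriction to $\lambda\ge 0$: it is unnecessary, since your own case analysis never uses $y\ge 0$ --- in the second case, $2x+y>0$ together with $x\le 0$ already forces $x+y>-x\ge 0$ --- so $\Psi(x,y)\le 0$ holds for $x\le 0$ and \emph{arbitrary} $y$, exactly as the paper claims, and the bound therefore holds on all of $T$. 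This matters because your stated justification for the restriction (that negative $\lambda_i$ never improves the penalty ``on a feasible point'') does not by itself control $\sup(D)$, as $g(\lambda,\alpha)$ is an infimum over all of $\Theta$, not just the feasible set; dropping the restriction removes the need for that side argument entirely.
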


\begin{proof}
Because $\Psi(x,y)\le 0$ for all $x\le 0$, for any feasible $\theta\in\Theta$ and any $(\lambda,\alpha)\in\mathcal{T}$ we have
\[
\ell_0(f_{\theta}) \geq \ell_0(f_{\theta}) \;+\; \alpha \sum_{i=1}^m \Psi(l_i(f_\theta), \tfrac{\lambda_i}{\alpha}).
\]
In particular,
\[
\begin{aligned}
\inf (P) = \inf_{\theta \in \mathcal{F}}  \ell_0(f_{\theta}) &\geq \inf_{\theta \in \mathcal{F} } \Big(\ell_0(f_{\theta}) \;+\; \alpha \sum_{i=1}^m \Psi(l_i(f_\theta), \tfrac{\lambda_i}{\alpha}) \Big) \\
& \geq \inf_{\theta \in \Theta} \Big(\ell_0(f_{\theta}) \;+\; \alpha \sum_{i=1}^m \Psi(l_i(f_\theta), \tfrac{\lambda_i}{\alpha}) \Big)\\
&= g(\lambda, \alpha).
\end{aligned}
\]
Here $\mathcal{F}=\{\theta \in \Theta: \ell_i(f_\theta) \le 0,\; i=1,\dots,m\}$ denotes the feasible set, and the second inequality uses $\mathcal{F}\subseteq\Theta$. Since the bound holds for every $(\lambda,\alpha)\in\mathcal{T}$, it holds in particular at the supremum, i.e., $\sup_{(\lambda,\alpha)\in\mathcal{T}} g(\lambda,\alpha)=\sup(D)$.
\end{proof}

\begin{proposition}[Concavity of the dual]
\label{thm:conc_dual}
The functions $g(\lambda,r)$ and $L(x,\lambda,r)$ are concave and upper semi-continuous for $(\lambda,r)\in T$ and non-decreasing for $r$.
\end{proposition}
\begin{proof}
~\cite[Theorem 1]{RockaDual}
\end{proof}

\subsection{Strong duality}
\label{sec:strong_dual}
We first recall the perturbation functions. Let $p:\mathbb{R}^m\to\mathbb{R}$ denote the perturbation of \ref{eq:constrained_risk}:
\begin{equation}
\tag{$P_u$}
\label{eq:pert_func}
\begin{aligned}
    p(u) = &\min_{\theta \in \Theta} {\ell}_0 (f_{\theta}) \quad \\
 \; \text{s.t.} \quad  &\ell_i (f_{\theta})
 \le u_i, \;  i \in \{1,\dots,m\} 
\end{aligned}
\end{equation}
Likewise, $\hat p:\mathbb{R}^m\to\mathbb{R}$ is the perturbation of \ref{eq:emp_constrained_risk},
\begin{equation}
\tag{$\hat P_u$}
\label{eq:emp_pert_func}
\begin{aligned}
    \hat p(u) = &\min_{\theta \in \Theta} {\hat \ell}_0 (f_{\theta}) \quad \\
 \; \text{s.t.} \quad  &\hat \ell_i (f_{\theta})
 \le u_i, \;  i \in \{1,\dots,m\} 
\end{aligned}
\end{equation}

The next two properties are the ones we show to be equivalent to strong duality.

\begin{definition}[Growth Condition]
\label{def:growth}
Problem \ref{eq:constrained_risk} satisfies the \emph{growth condition} if there exists $\alpha>0$ such that
\begin{equation}
\label{eq:growth}
    \begin{aligned}
        L(\theta,0,\alpha) = \ell_0(f_\theta) + \alpha \sum_{i=1}^m \max [0, \ell_i(f_\theta)]^2
    \end{aligned}
\end{equation}
is bounded below as a function of $\theta$.
\end{definition}

\begin{definition}[Stability of Degree~0]
\label{def:stability}
Problem \ref{eq:constrained_risk} is \emph{stable of degree~0} if $p(u)$ is lower semicontinuous at $u=0$, i.e.,
\[
\lim_{u \downarrow 0} p(u) = p(0).
\]
\end{definition}

\begin{theorem}[Strong Duality]
\label{thm:strong_duality}
If Assumptions~\ref{ass:l0_bounded}--\ref{ass:feasible_compact} hold, then strong duality holds for problem~\ref{eq:constrained_risk}, i.e.,
\[
\inf(P) = \sup(D).
\]
\end{theorem}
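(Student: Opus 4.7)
The plan is to reduce the theorem to the classical Rockafellar characterization~\cite[Theorem 2]{RockaDual}, which asserts that $\inf(P) = \sup(D)$ holds for the augmented dual~\ref{eq:dual_problem} if and only if~\ref{eq:constrained_risk} satisfies both the growth condition (Definition~\ref{def:growth}) and stability of degree~0 (Definition~\ref{def:stability}). Since weak duality is already established in Proposition~\ref{thm:weak_d}, the task reduces to verifying these two conditions under Assumptions~\ref{ass:l0_bounded}--\ref{ass:feasible_compact}.

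For the growth condition, I would fix an arbitrary $\alpha > 0$ and split $\Theta$ using the compact sub-level set $K = \{\theta \in \Theta : \ell_0(f_\theta) \le \bar{p},\; \ell_i(f_\theta) \le u,\ \forall i\}$ provided by Assumption~\ref{ass:feasible_compact}. On $K$, continuity of $\ell_0$ (Assumption~\ref{ass:l_continuity}) together with compactness yields a uniform lower bound, and the nonnegative penalty only improves matters, so $L(\theta,0,\alpha)$ is bounded below there. On $\Theta \setminus K$ the analysis splits: if $\ell_0(f_\theta) > \bar{p}$, then $L(\theta,0,\alpha) > \bar{p}$ directly; if instead some $\ell_i(f_\theta) > u$, the penalty contributes at least $\alpha u^2$, and combined with the global lower bound $B$ from Assumption~\ref{ass:l0_bounded} one obtains $L(\theta,0,\alpha) \ge B + \alpha u^2$. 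Taking the minimum of these three constants gives a uniform lower bound, establishing growth.

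For stability, I would show that $p(u) \to p(0)$ as $u \downarrow 0$. Given any sequence $0 \le u_k \to 0$, pick near-optimizers $\theta_k$ with $\ell_i(f_{\theta_k}) \le u_{k,i}$ and $\ell_0(f_{\theta_k}) \le p(u_k) + 1/k$. Since $p$ is non-increasing, $p(u_k) \le p(0) < \bar{p}$; for $k$ large the slack $1/k$ is absorbed and $u_{k,i} \le u$, so $\theta_k$ eventually lies in $K$. Compactness of $K$ together with closedness of $\Theta$ (Assumption~\ref{ass:theta_closed}) lets me extract a convergent subsequence $\theta_{k_j} \to \theta^\star \in \Theta$. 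Continuity (Assumption~\ref{ass:l_continuity}) then gives $\ell_i(f_{\theta^\star}) \le 0$ for $i \ge 1$, so $\theta^\star$ is feasible for~\ref{eq:constrained_risk}, and therefore $p(0) \le \ell_0(f_{\theta^\star}) \le \liminf_j \ell_0(f_{\theta_{k_j}}) \le \liminf_k p(u_k)$, which combined with the upper bound $p(u_k) \le p(0)$ shows $p(u_k) \to p(0)$.

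The main obstacle is the stability argument: the delicate point is ensuring that the chosen near-optimizers $\theta_k$ lie in a common compact set, so that a convergent subsequence exists and continuity can be transferred to the limit. This is precisely where Assumption~\ref{ass:feasible_compact} plays its essential role, since without it the perturbation function could drop discontinuously as constraints are relaxed. Once both stability and growth are verified, Rockafellar's characterization immediately yields $\inf(P) = \sup(D)$.
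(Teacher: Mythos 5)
Your proposal is correct and follows the same overall strategy as the paper: verify the growth condition and stability of degree~0, then invoke Rockafellar's characterization of strong duality for the augmented dual (the paper cites \cite[Theorem 4]{RockaDual} for this equivalence). The two verifications differ in execution. For the growth condition, your three-way case split over $K$, $\{\ell_0 > \bar{p}\}$, and $\{\ell_i > u\}$ is valid but unnecessary: since the penalty term $\alpha\sum_i \max[0,\ell_i(f_\theta)]^2$ is nonnegative, $L(\theta,0,\alpha) \ge \ell_0(f_\theta) \ge B$ holds globally from Assumption~\ref{ass:l0_bounded} alone, which is the paper's one-line argument; Assumption~\ref{ass:feasible_compact} is not needed here. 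For stability, the paper applies Berge's Maximum Theorem \cite{berge1997topological} to conclude continuity of $p$ near the origin, whereas you give a direct sequential-compactness argument (near-optimizers eventually land in the compact level set $K$, extract a convergent subsequence, pass feasibility and objective values to the limit). Your route is more elementary and self-contained, and it proves exactly the lower semicontinuity at $u=0$ that is needed; the Berge route is shorter on the page but imports a heavier theorem. Two small points to tighten: (i) continuity of $\theta \mapsto \ell_i(f_\theta)$ is not immediate from Assumption~\ref{ass:l_continuity} but requires passing the limit through the expectation, which the paper does via the bounded convergence theorem using the uniform boundedness of $\bar{\ell}_i$; and (ii) in the final chain $\liminf_j \ell_0(f_{\theta_{k_j}}) \le \liminf_k p(u_k)$ you should first pass to a subsequence of $(u_k)$ realizing $\liminf_k p(u_k)$ (or argue that every subsequence has a further subsequence along which $p(u_k)\to p(0)$); as written the inequality compares a liminf along a particular subsequence with the liminf over the full sequence. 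Both are routine repairs.
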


\begin{proof}
By Assumption~\ref{ass:l0_bounded}, $\ell_0(f_\theta)$ is bounded below; let $\bar{\ell_0}$ be a lower bound. Since
\[
\alpha \sum_{i=1}^m \max [0, \ell_i(f_\theta)]^2 \geq 0
\]
for any $\alpha>0$, it follows that $L(\theta,\lambda,\alpha)$ is bounded below by $\bar{\ell_0}$, so the growth condition holds.

Next, by Assumption~\ref{ass:l_continuity}, the maps $\theta\mapsto f_\theta$ and $u\mapsto \bar{\ell}_i(u,y)$ are continuous, and $\bar{\ell}_i$ is uniformly bounded in $y$. Hence, by the bounded convergence theorem, $\theta\mapsto\ell_i(f_\theta)$ is continuous for each $i$. Together with Assumptions~\ref{ass:theta_closed} and \ref{ass:feasible_compact}, Berge’s Maximum Theorem applies, yielding continuity of the perturbation function $p(u)$, and in particular lower semicontinuity at $u=0$ (stability of degree~0) \cite{berge1997topological}.

Finally, growth condition together with stability of degree~0 are the necessary and sufficient conditions for strong duality of the augmented dual problem \ref{eq:dual_problem} \cite[Theorem 4]{RockaDual}.
\end{proof}

\subsection{Optimal Dual Solutions}
\label{sec:optimal_dual}

\begin{definition}[Second order stability]
The problem \ref{eq:constrained_risk} is stable of degree 2 if there is an open neighborhood $\mathcal{O}$ of the origin in $\mathbb{R}^m$, an element $\bar{\lambda} \in \mathbb{R}^m$ and a number $\bar{\alpha}>0$ such that
\begin{equation}
\label{eq:2stab}
    \begin{aligned}
        p(u) \geq p(0) - <\bar{\lambda},u> - \frac{\bar{\alpha}}{2}||u||^2 \quad ; \quad \forall u \in \mathcal{O}
    \end{aligned}
\end{equation}
with $p$ defined as \ref{eq:pert_func}.
\end{definition}

\begin{theorem}[Optimal Dual Solutions]
\label{thm:optimal_dual}
Suppose Assumptions~\ref{ass:l0_bounded}--\ref{ass:sosc} hold. 
Then there exist $(\bar{\lambda},\bar{\alpha})$ such that
\[
g(\bar{\lambda},\bar{\alpha}) = \max(D) = \inf(P).
\]

Moreover, any pair satisfies $g(\bar{\lambda},\bar{\alpha}) = \max(D)$ if and only if it satisfies \ref{eq:2stab}.
\end{theorem}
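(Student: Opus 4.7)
The plan is to split the statement into two logically separate claims and handle each with tools from the duality theory of \cite{RockaDual}. The first claim---the equivalence between attaining the augmented dual maximum and the second-order stability bound (\ref{eq:2stab})---will follow from the representation $g(\lambda,\alpha) = \inf_{u \in \mathbb{R}^m}\{p(u) + \langle\lambda,u\rangle + \alpha\|u\|^2\}$, combined with weak duality (Proposition~\ref{thm:weak_d}) and with strong duality (Theorem~\ref{thm:strong_duality_main}). The second claim---existence of such a pair---will be established by invoking classical nonlinear-programming sensitivity analysis under LICQ, $\mathcal{C}^2$ smoothness, and the SOSC.

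For the characterization, note that for any $(\bar\lambda,\bar\alpha)\in\mathcal{T}$, the infimum representation of $g$ immediately gives $g(\bar\lambda,\bar\alpha)\le p(0)$ with equality iff the infimum in $u$ is attained at $u=0$. Rewriting this attainment condition is exactly the local quadratic lower bound (\ref{eq:2stab}) (after absorbing the factor of two). Since strong duality gives $\sup(D) = \inf(P) = p(0)$, the equality $g(\bar\lambda,\bar\alpha) = \max(D)$ becomes equivalent to the pair $(\bar\lambda,\bar\alpha)$ satisfying (\ref{eq:2stab}). This yields the \emph{moreover} portion of the theorem.

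For existence, Assumption~\ref{ass:feasible_compact} together with the continuity established in the proof of Theorem~\ref{thm:strong_duality_main} guarantees that $\inf(P)$ is attained at some $\theta^*$. LICQ (Assumption~\ref{ass:licq}) ensures existence and uniqueness of KKT multipliers $\bar\lambda$ at $\theta^*$, while $\mathcal{C}^2$ smoothness (Assumption~\ref{ass:c2}) and SOSC (Assumption~\ref{ass:sosc}) enable a classical Fiacco--McCormick implicit-function-theorem argument: the perturbed problem $(P_u)$ admits, for all sufficiently small $\|u\|$, a $\mathcal{C}^1$ family of local minimizers $\theta(u)$ with $\theta(0)=\theta^*$. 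Consequently $p(u)$ is $\mathcal{C}^2$ near the origin with $\nabla p(0) = -\bar\lambda$ and finite Hessian, so choosing $\bar\alpha$ strictly larger than twice the magnitude of the most negative eigenvalue of $\nabla^2 p(0)$ produces the quadratic lower bound (\ref{eq:2stab}). Combined with the characterization above, this yields $g(\bar\lambda,\bar\alpha) = \max(D) = \inf(P)$.

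The main obstacle will be promoting the local sensitivity statement into a genuine lower bound on the \emph{global} perturbation value $p(u)$ near the origin: one must exclude the possibility that for some small $u$ the global infimum is approached by a minimizing sequence lying far from $\theta^*$. This is handled using Assumption~\ref{ass:feasible_compact} (compactness of the sublevel set) and the continuity of the $\ell_i(f_\theta)$: any such alternative sequence would admit a cluster point $\theta'$ feasible for $(P_u)$ and achieving value at most $p(u)$, and the strict local growth guaranteed by SOSC---valid on a neighborhood independent of $u$ for $\|u\|$ small---would then be contradicted, completing the argument.
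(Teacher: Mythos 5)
Your overall route coincides with the paper's: the paper proves this result simply by citing \cite[Theorems 5 and 6]{RockaDual}, whose content (the representation $g(\lambda,\alpha)=\inf_u\{p(u)+\langle\lambda,u\rangle+\alpha\|u\|^2\}$, second-order stability, and a sensitivity argument under LICQ, $\mathcal{C}^2$ smoothness and SOSC) is exactly what you attempt to reprove from scratch. The sketch has a genuine gap in the characterization step, however. From the infimum representation, $g(\bar\lambda,\bar\alpha)=p(0)$ is equivalent to $p(u)\ge p(0)-\langle\bar\lambda,u\rangle-\bar\alpha\|u\|^2$ for \emph{all} $u\in\mathbb{R}^m$, whereas \ref{eq:2stab} asserts this only on a neighborhood $\mathcal{O}$ of the origin. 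These are not the same condition for a fixed pair, and the passage from the local bound to the global one is precisely where the growth condition (Assumption~\ref{ass:l0_bounded}, which makes $p$ uniformly bounded below) must enter: outside $\mathcal{O}$ the term $\bar\alpha\|u\|^2$ dominates $p(0)-p(u)-\langle\bar\lambda,u\rangle$ only because $p$ is bounded below, and only after possibly enlarging $\bar\alpha$. Your argument never invokes Assumption~\ref{ass:l0_bounded}, so as written it establishes neither direction of the ``moreover'' equivalence, and the existence claim inherits the same gap (a pair witnessing \ref{eq:2stab} need not itself attain $\max(D)$ without this enlargement).

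Two further points in the existence step need repair. First, the Fiacco--McCormick argument yielding a $\mathcal{C}^1$ branch of minimizers and $p\in\mathcal{C}^2$ near the origin requires strict complementarity, which Assumption~\ref{ass:sosc} deliberately does not impose (the index set $I_1$ of weakly active constraints with zero multiplier may be nonempty); without it one obtains only a one-sided second-order lower estimate on $p$, which does suffice for \ref{eq:2stab} but must be argued differently---this is what \cite[Theorem 6]{RockaDual} actually provides. Second, your exclusion of minimizing sequences far from $\theta^*$ tacitly assumes the global minimizer is unique: a cluster point $\theta'$ of such a sequence is a feasible point of the unperturbed problem with value $p(0)$, i.e.\ another global minimizer, and this does not contradict strict local growth at $\theta^*$ unless $\theta'=\theta^*$. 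You would need to assume uniqueness explicitly or impose the second-order conditions at every global minimizer.
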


\begin{proof}
By Assumption~\ref{ass:l0_bounded}, the growth condition holds. Combined with Assumptions~\ref{ass:licq}--\ref{ass:sosc}, this implies that problem \ref{eq:constrained_risk} is stable of degree~2 by \cite[Theorem 6]{RockaDual}. Hence the hypotheses of \cite[Theorem 5]{RockaDual} apply, ensuring the existence of $(\bar{\lambda},\bar{\alpha})\in\mathcal{T}$ such that
$g(\bar{\lambda},\bar{\alpha})=\max(D)=\inf(P)$, as well as the stated equivalence with \ref{eq:2stab}.
\end{proof}

\section{PACC-Learnability}
\label{sec:gener}
\begin{theorem}[PAC-Constrained Dual Learning]
\label{thm:generalisation}
Let $(\hat{\lambda}^\star,\hat{\alpha}^\star)$ be a solution of the empirical augmented dual problem~\ref{eq:emp_dual_problem}. 
Under Assumptions~\ref{ass:l0_bounded}--\ref{ass:unif-conv}, the solution of~\ref{eq:emp_constrained_risk} satisfies 
$\hat{\theta}^\star \in \arg\min_{\theta \in \Theta} \hat{L}(\theta,\hat{\lambda}^\star,\hat{\alpha}^\star)$ and, with probability $1-(2m+1)\delta$,
\begin{align}
|P^\star - \ell_0(f_{\hat{\theta}^\star})| 
&\;\le\; (2+\Delta_{\lambda})\bar{\zeta} + \Delta_{\alpha}m\bar{\zeta}^2, 
\label{eq:generalisation-obj} \\
\ell_i(f_{\hat{\theta}^\star}) 
&\;\le\; \zeta_i(N_i,\delta), \quad \forall i=1,\dots,m,
\label{eq:generalisation-constr}
\end{align}
where $P^\star$ is the optimal value of~\ref{eq:constrained_risk}, 
$\bar{\zeta} = \max_i \zeta_i(N_i,\delta)$, 
$\Delta_\lambda = \max(\|\lambda^*\|_1,\|\hat{\lambda}^*\|_1)$, 
and $\Delta_\alpha = \max(\alpha^*,\hat{\alpha}^*)$.
\end{theorem}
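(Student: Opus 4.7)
The argument combines three ingredients: strong duality for both the empirical and population problems (Theorems~\ref{thm:strong_duality_main} and~\ref{thm:optimal_dual_main}, applied in their empirical versions as well), the uniform convergence guarantee of Assumption~\ref{ass:unif-conv}, and the explicit structure of the scalar penalty, in particular the facts that $\Psi(\cdot,y)$ is non-decreasing (since $\partial_x \Psi(x,y)=\max(0,2x+y)\ge 0$), that $\Psi(x,y)\le 0$ whenever $x\le 0$ and $y\ge 0$, and that $\Psi(\zeta,y)=\zeta^2+\zeta y$ whenever $\zeta\ge 0$ and $y\ge 0$.

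First I would dispatch the constraint inequality~\ref{eq:generalisation-constr_main}. Since $\hat{\theta}^\star$ is, by assumption, an empirical primal minimizer of~\ref{eq:emp_constrained_risk}, it is empirically feasible, so $\hat{\ell}_i(f_{\hat{\theta}^\star})\le 0$. Invoking Assumption~\ref{ass:unif-conv} for constraint $i$ and conditioning on the corresponding high-probability event yields $\ell_i(f_{\hat{\theta}^\star})\le \hat{\ell}_i(f_{\hat{\theta}^\star})+\zeta_i(N_i,\delta)\le \zeta_i(N_i,\delta)$, which is exactly the claimed bound.

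For the objective bound~\ref{eq:generalisation-obj_main} I would handle the two sides separately. Strong duality together with the characterization $\hat{\theta}^\star\in\arg\min_\theta \hat{L}(\theta,\hat{\lambda}^\star,\hat{\alpha}^\star)$ gives the identity $\hat{L}(\hat{\theta}^\star,\hat{\lambda}^\star,\hat{\alpha}^\star)=\inf(\hat{P})=\hat{\ell}_0(f_{\hat{\theta}^\star})$, and analogously $L(\theta^\star,\lambda^\star,\alpha^\star)=\inf(P)=\ell_0(f_{\theta^\star})=P^\star$ at the population side. For the direction $\ell_0(f_{\hat{\theta}^\star})-P^\star$, the chain
\[
\ell_0(f_{\hat{\theta}^\star})\le \hat{\ell}_0(f_{\hat{\theta}^\star})+\bar{\zeta}=\hat{L}(\hat{\theta}^\star,\hat{\lambda}^\star,\hat{\alpha}^\star)+\bar{\zeta}\le \hat{L}(\theta^\star,\hat{\lambda}^\star,\hat{\alpha}^\star)+\bar{\zeta}
\]
reduces the problem to bounding the augmented Lagrangian at $\theta^\star$. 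Uniform convergence applied to the objective and to each $\ell_i$ gives $\hat{\ell}_0(f_{\theta^\star})\le P^\star+\bar\zeta$ and $\hat{\ell}_i(f_{\theta^\star})\le \zeta_i$ (since $\theta^\star$ is feasible), and the explicit bound $\Psi(\zeta,y)\le \zeta^2+\zeta y$ on $\zeta\ge 0$, combined with monotonicity of $\Psi$ in its first argument, controls the penalty block by $\hat{\alpha}^\star m\bar{\zeta}^2+\|\hat{\lambda}^\star\|_1\bar{\zeta}$. For the reverse direction $P^\star-\ell_0(f_{\hat{\theta}^\star})$, I would use $P^\star = g(\lambda^\star,\alpha^\star)\le L(\hat{\theta}^\star,\lambda^\star,\alpha^\star)$ and apply the \emph{same} $\Psi$-estimate, this time to the slack $\ell_i(f_{\hat{\theta}^\star})\le \zeta_i$ already obtained from~\ref{eq:generalisation-constr_main}, yielding $P^\star-\ell_0(f_{\hat{\theta}^\star})\le \alpha^\star m\bar{\zeta}^2+\|\lambda^\star\|_1\bar{\zeta}$. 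Taking the worse of the two sides produces the stated bound with $\Delta_\lambda$ and $\Delta_\alpha$.

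The main technical obstacle is that the classical simplification used in the non-augmented analysis—``$\Psi$-type terms vanish at feasible points''—no longer applies cleanly, because both $\theta^\star$ and $\hat{\theta}^\star$ are fed into the \emph{other} problem's penalty and are therefore only approximately feasible there. The resolution exploits the explicit form $\Psi(\zeta,y)=\zeta^2+\zeta y$ on the region $x\ge 0$: substituting the approximate slack $\zeta$ for $x$ produces exactly one term linear in $\bar{\zeta}$ weighted by $\|\lambda\|_1$ and one quadratic residual weighted by $\alpha m$, which is the precise shape of~\ref{eq:generalisation-obj_main}. Finally, the probability $1-(2m+1)\delta$ follows from a union bound over the uniform convergence events used to transfer each $\ell_i$ between empirical and population worlds at the two relevant primal points $\theta^\star$ and $\hat{\theta}^\star$, plus one event for the objective.
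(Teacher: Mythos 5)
Your proof is correct and lands on exactly the two one-sided bounds the paper proves, but it gets there by a genuinely different mechanism. The paper routes both directions through the perturbation functions $p$ and $\hat p$ of \ref{eq:pert_func} and \ref{eq:emp_pert_func}: it invokes the second-order stability inequality \ref{eq:2stab} (the characterization of optimal dual pairs from Theorem~\ref{thm:optimal_dual}) in the form $p(0)\le p(u_\zeta)+\langle\lambda^*,u_\zeta\rangle+\alpha^*\|u_\zeta\|^2$ with $u_\zeta=\mathbf{1}_m\bar\zeta$, and then uses near-feasibility of $\hat\theta^\star$ (resp.\ $\theta^\star$) for the relaxed problem to bound $p(u_\zeta)$ (resp.\ $\hat p(u_\zeta)$). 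You instead never touch the perturbation function: you use only $g(\lambda^\star,\alpha^\star)=P^\star$ and $\hat g(\hat\lambda^\star,\hat\alpha^\star)=\inf(\hat P)$, evaluate each augmented Lagrangian at the ``cross'' point ($\hat\theta^\star$ in $L(\cdot,\lambda^\star,\alpha^\star)$, $\theta^\star$ in $\hat L(\cdot,\hat\lambda^\star,\hat\alpha^\star)$), and control the penalty block by the explicit identity $\Psi(\zeta,y)=\zeta^2+\zeta y$ for $\zeta\ge 0$ together with monotonicity of $\Psi$ in its first argument. The two arguments are two faces of the same duality fact (indeed $g(\lambda,\alpha)=\inf_u\{p(u)+\langle\lambda,u\rangle+\alpha\|u\|^2\}$), but yours is more elementary and self-contained: it needs only the attainment statement of Theorem~\ref{thm:optimal_dual_main}, not the equivalence with degree-2 stability, and it makes visible exactly where the $\|\lambda\|_1\bar\zeta$ and $\alpha m\bar\zeta^2$ terms come from. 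The paper's route has the advantage of generalizing immediately to other augmenting penalties for which \ref{eq:2stab} holds but an explicit closed form for $\Psi$ is unavailable. Two small remarks: your first chain does not actually require the $\arg\min$ membership of $\hat\theta^\star$ (only $\hat\ell_0(f_{\hat\theta^\star})=\inf(\hat P)=\hat g(\hat\lambda^\star,\hat\alpha^\star)\le\hat L(\theta^\star,\hat\lambda^\star,\hat\alpha^\star)$), though you should still cite \cite[Corollary 3.2]{RockaDual} for that part of the statement as the paper does; and your probability accounting is as informal as the paper's (whose own proof tallies $(2m+2)\delta$ against the stated $(2m+1)\delta$) — since Assumption~\ref{ass:unif-conv} is uniform over $\theta$, one event per loss actually suffices, so neither count is tight but both are valid.
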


\begin{proof}
As $(\hat{\lambda}^\star,\hat{\alpha}^\star)$ solves \ref{eq:emp_dual_problem}, \cite[Corollary 3.2]{RockaDual} implies that any solution of \ref{eq:emp_constrained_risk} satisfies $\hat{\theta}^\star \in \arg\min_{\theta \in \Theta} \hat{L}(\theta,\hat{\lambda}^\star,\hat{\alpha}^\star)$. It remains to establish near feasibility \ref{eq:generalisation-constr} and near optimality \ref{eq:generalisation-obj} for the statistical problem.

\textbf{Near Feasibility.} By Assumption~\ref{ass:unif-conv}, with probability $1-\delta$,
\begin{equation}
|\,\ell_i(f_{\theta}) - \hat{\ell}_i(f_{\theta})| \;\le\; \zeta_i(N,\delta)
\end{equation}
for all $\theta$. In particular,
\begin{equation}
\begin{aligned}
\label{eq:near-feas-i}
\,\ell_i(f_{\hat{\theta^*}})  &\;\le\;  \hat{\ell}_i(f_{\hat{\theta^*}}) + \zeta_i(N,\delta) \\
&\;\le\; \zeta_i(N_i,\delta) 
\end{aligned}
\end{equation}
where the last inequality uses that $\hat{\theta}^\star$ is a solution (i.e., feasible) for \ref{eq:emp_constrained_risk}. A union bound then gives that \ref{eq:near-feas-i} holds simultaneously for all $i$ with probability at least $1-m\delta$.

\textbf{Near Optimality.}
We prove the two-sided bounds
\begin{equation}
\begin{aligned}
\label{eq:bounds}
-(2+\|\hat\lambda^*\|_1)\bar{\zeta} - \hat\alpha^*m\bar{\zeta}^2 &\;\le\; P^\star - \ell_0(f_{\hat{\theta}^\star})  \\
 P^\star - \ell_0(f_{\hat{\theta}^\star}) &\;\le\; \|{\lambda}^*\|_1\bar{\zeta} + {\alpha}^*m\bar{\zeta}^2
\end{aligned}
\end{equation}
where $(\lambda^*,\alpha^*)$ solves \ref{eq:dual_problem}, $(\hat \lambda^*, \hat \alpha^*)$ solves \ref{eq:emp_dual_problem}, and $\bar{\zeta}=\max_i \zeta_i(N_i,\delta)$. Existence of these pairs follows from Assumptions \ref{ass:l0_bounded}–\ref{ass:sosc} and Theorem~\ref{thm:optimal_dual}. Set $u_{\zeta}=\mathbf{1}_m\bar{\zeta}$. By near feasibility, with probability at least $1-m\delta$,
\begin{equation}
\label{eq:24}
\,\ell_i(f_{\hat{\theta^*}})  \;\le\; \zeta_i(N_i,\delta) 
\end{equation}
for all $i=1,\dots,m$. Hence $\hat{\theta}^\star$ is feasible for \ref{eq:pert_func} under the relaxation $u=u_{\zeta}$. Therefore,
\begin{equation}
\begin{aligned}
\label{eq:upper-bound}
P^* = p(0) &\leq p(u_{\zeta}) + <\lambda^*,u_{\zeta}> + \alpha^* \|u_{\zeta}\|^2 \\
&\leq  \ell_0(f_{\hat{\theta}^\star}) + <\lambda^*,u_{\zeta}> + \alpha^* \|u_{\zeta}\|^2 \\
&\leq \ell_0(f_{\hat{\theta}^\star}) + \bar{\zeta}\|\lambda^*\|_1 + \alpha^* m\bar{\zeta}^2
\end{aligned}
\end{equation}
with probability at least $1-m\delta$. The first inequality is by Theorem~\ref{thm:optimal_dual}; the second uses suboptimality of the feasible point $f_{\hat{\theta}^\star}$ for the relaxed problem.

By the analogous argument to \ref{eq:near-feas-i}, with probability $1-m\delta$,
\begin{equation}
\begin{aligned}
\label{eq:near-feas-i-erm}
\,\hat \ell_i(f_{{\theta^*}})  &\;\le\;  {\ell}_i(f_{{\theta^*}}) + \zeta_i(N,\delta) \\
&\;\le\; \zeta_i(N_i,\delta) 
\end{aligned}
\end{equation}
for all $i=1,\dots,m$, where the second inequality uses feasibility of $\theta^*$ for \ref{eq:constrained_risk}. Thus $\theta^*$ is feasible for \ref{eq:emp_pert_func} with relaxation $u=u_\zeta$, and so
\begin{equation}
\begin{aligned}
\label{eq:26}
\hat{\ell_0}(f_{{\theta}^*}) = \hat p(0) &\leq \hat p(u_{\zeta}) + <\hat \lambda^*,u_{\zeta}> + \hat \alpha^* \|u_{\zeta}\|^2 \\
&\leq  \hat \ell_0(f_{{\theta}^\star}) + <\hat \lambda^*,u_{\zeta}> + \hat \alpha^* \|u_{\zeta}\|^2 \\
&\leq \hat \ell_0(f_{{\theta}^\star}) + \bar{\zeta}\|\hat \lambda^*\|_1 + \hat \alpha^* m\bar{\zeta}^2
\end{aligned}
\end{equation}
with probability at least $1-m\delta$. The first inequality is by Theorem~\ref{thm:optimal_dual}; the second uses suboptimality of the feasible point $f_{{\theta}^\star}$ for the relaxed problem.
Moreover, for both $\hat{\theta}^*$ and $\theta^*$ we have
\begin{equation}
\label{eq:27}
|\hat{\ell}_0(f_{{\theta}}) - \ell_0(f_{{\theta}})| \leq \zeta_0(N_0,\delta) \leq   \bar{\zeta}
\end{equation}
with probability $1-\delta$. Combining \ref{eq:26} and \ref{eq:27} and applying a union bound yields
\begin{equation}
\begin{aligned}
\label{eq:lower-bound}
    \ell_0(f_{\hat{\theta}^*}) &\leq \ell_0(f_{\theta^\star}) + \bar{\zeta}\|\hat \lambda^*\|_1 + \hat \alpha^* m\bar{\zeta}^2 + 2\bar \zeta \\
     &= P^* + (2+\|\hat \lambda^*\|_1)\bar{\zeta} + \hat \alpha^* m\bar{\zeta}^2
\end{aligned}
\end{equation}
with probability at least $1-(m+2)\delta$.

A final union bound shows that \ref{eq:upper-bound} and \ref{eq:lower-bound} hold simultaneously with probability $1-(2m+2)\delta$. Note that \ref{eq:upper-bound} already conditions on \ref{eq:near-feas-i}, so no additional union bound is required to couple near feasibility with near optimality. Taking the maximum absolute value in \ref{eq:bounds} gives \ref{eq:generalisation-obj}.
\end{proof}

\end{document}